\DeclareMathSymbol{\mh}{\mathord}{operators}{`\-}
\newtheorem{observation}{Observation ---}
\newtheorem{theorem}{Theorem}
\begin{document}

\title{A Robust Optimization Method for Label Noisy Datasets Based on Adaptive Threshold: Adaptive-k}

\author{Enes Dedeoglu,  Himmet Toprak Kesgin, Mehmet Fatih Amasyali \\
  \textit{Department of Computer Engineering} \\
  \textit{Yildiz Technical University} \\
  Istanbul, Turkey\\
  \texttt{enes.dedeoglu@std.yildiz.edu.tr} \\
  \texttt{\{tkesgin, amasyali\}@yildiz.edu.tr}
}
\maketitle
\thispagestyle{empty}

\begin{abstract}
SGD does not produce robust results on datasets with label noise. Because the gradients calculated according to the losses of the noisy samples cause the optimization process to go in the wrong direction. In this paper, as an alternative to SGD, we recommend using samples with loss less than a threshold value determined during the optimization process, instead of using all samples in the mini-batch. Our proposed method, Adaptive-k, aims to exclude label noise samples from the optimization process and make the process robust. On noisy datasets, we found that using a threshold-based approach, such as Adaptive-k, produces better results than using all samples or a fixed number of low-loss samples in the mini-batch. Based on our theoretical analysis and experimental results, we show that the Adaptive-k method is closest to the performance of the oracle, in which noisy samples are entirely removed from the dataset. Adaptive-k is a simple but effective method. It does not require prior knowledge of the noise ratio of the dataset, does not require additional model training, and does not increase training time significantly. The code for Adaptive-k is available at https://github.com/enesdedeoglu-TR/Adaptive-k.
\end{abstract}

\section{Introduction}
Training deep learning models usually involves large datasets that are prone to label noise.
When the model is trained on a dataset that contains label noise, the classification performance decreases on the test dataset.
In addition, deep learning models can easily fit the datasets with random labels \cite{zhang2021understanding}. 

In the label noisy datasets, the noisy samples are separated at the beginning and the training made with only noiseless samples is more successful than the training with all samples.
However, in order to make this distinction in a real dataset, it is necessary to know which samples have noise, which may be very costly and in some cases not possible.
On the other hand, samples with label noise tend to have high loss values in the early stages of optimization \cite{liao2020weakly, ortego2021towards, arazo2019unsupervised,  nishi2021augmentation, majidi2021exponentiated}.
Based on this information, it has been shown that performing the updates only with the smallest k samples in loss, instead of all samples in the mini-batch, increases the success in label noisy datasets \cite{shah2020choosing}.
Yet, there is a difference in success between using only noiseless samples and using k samples with the smallest loss, which is Min-k Loss (MKL) in the mini-batch.
This means that the detection of noisy samples can be done even better.

In this study, in order to narrow this performance gap, it is proposed to adaptively determine the number of samples selected for updating from the mini-batch. Instead of choosing k samples for each mini-batch, we argue that choosing a variable number of samples according to the loss distributions of the previous and current samples in mini-batches will improve the optimization process. Because samples are randomly distributed to the mini-batch, samples with the highest loss in the mini-batch may not be labeled as noisy. For example, if all the samples are noiseless in the mini-batch, not using some of them will slow down the optimization process. However, if all the samples are noisy, it is necessary to eliminate all of them so that they do not mislead the optimization in the wrong direction. For these reasons, it is not possible to determine the adaptive k by only looking at the losses of the samples in the existing mini-batch. From this point of view, we propose a method in which the k sample selection threshold is determined by using the losses of previous mini-batches, inspired by adaptive optimization methods.

It has been shown that the use of adaptive k instead of fixed k in experiments better distinguishes samples with label noise. In other words, the proposed adaptive k method has been shown to increase success in label noisy datasets. Since samples with label noise will have higher loss values than samples without noise, not using samples with high loss values in the mini-batch prevents the optimization from slowing down and/or going in the wrong direction. The results showed that the Adaptive-k method is closest to the success of the oracle method, in which noisy samples are deleted from the dataset and not used in the optimization process. 
Adaptive-k does not necessarily have to be used with label noisy datasets. In the absence of label noise, it can perform (almost) as well as SGD (vanilla) by excluding very few samples during training. It is possible to associate this with curriculum learning, which reduces the priority of difficult samples and in some conditions, may improve vanilla performance \cite{bengio2009curriculum, kesgin2022cyclical}.

The primary contributions of this research are as follows: 

\begin{itemize}
\item Introducing Adaptive-k, a novel algorithm for robust training of label noisy datasets.
\item Theoretical analysis of Adaptive-k and MKL algorithm comparison with SGD.
\item High accuracy noise ratio estimation using the proposed Adaptive-k without making hyperparameter adjustments or having prior knowledge of the dataset.
\item Along with the theoretical analysis, In empirical research, Oracle, SGD, MKL, SGD-MKL, and Adaptive-k algorithms are compared on three images and four text datasets.
\item Additionally, the Adaptive-k algorithm is easy to implement, works in a mini-batch level, and doesn't require training additional models\cite{han2018co} or data augmentation \cite{yang2021learning}.
\end{itemize}

\section{Related Work}
In machine learning models trained with SGD, the presence of outliers and noisy values can significantly skew the model parameters. MKL, which was developed to avoid this effect, is a simple variant of the SGD method \cite{shah2020choosing}. At each step, a set of samples is selected for the mini-batch, then an SGD-like update is performed on only the samples with the smallest k losses. Algorithm-1 shows the pseudocode of the MKL algorithm. Because the k number is fixed in the MKL method, always k samples are selected in each update. Previous mini-batch updates and sample distributions in the current mini-batch are not considered.

% \begin{algorithm}[H]
% \caption{MKL}
% \begin{algorithmic}[1]
% \Statex
% \State Initialize {$\pmb{w_{0}}$}
% \State Given samples ${D} = (\pmb{x_{t}}, y_{t})^\infty_{t=1}$
% \For {$t = 1,... $}
% \State Choose a set $S_{t}$ of $k$ samples
% \State Select $i_{t} =$ arg min$_{i \in S_{t}} f_{i}${$(\pmb{w_{t}})$}
% \State Update {$\pmb{w_{t+1}} = \pmb{w_{t}} - \eta \nabla f_{i_t}{(\pmb{w_{t}})}$}
% \EndFor
% \State Return {$\pmb{w_{t}}$}
% \end{algorithmic}
% \end{algorithm}

\begin{algorithm}[h]
\caption{MKL}
\begin{algorithmic}[1]
\Require
\State $T$ : Epoch count.
\State $N$ : Iteration count.
\State $D$ : Training dataset.
\State $f$ : Loss function.
\State $k$ : Number of samples with the lowest loss chosen

\Statex

\Procedure {train}{}
\For {$i \in \{1...T\}$}
\For {$j \in \{1...N\}$}
\State Fetch mini-batch $\overline{D}$ from $D$
%\State $losses\leftarrow$ $f(\overline{D})$ 
\State Sort $f(\overline{D})$ in ascending order
%\State Select $p =$ arg sort({$f(\overline{D})$}) (select smallest k sample's indexes)
\State $\overline{D'} \leftarrow \overline{D}[1,...,k]$
%\State $\overline{D'}$ = D[p]
\State Update model with $\overline{D'}$

\EndFor
\EndFor

\EndProcedure
\end{algorithmic}
\end{algorithm}

Another study proposed using a function called R(t) to determine the number of samples to be chosen in the mini-batch \cite{yao2020searching}. The function is determined not based on selected samples, but based on time. The Optimal R(t) function should be determined for each dataset. Additionally, learning the R function takes a long time. Our method makes a mini-batch-specific selection that isn't specific to a dataset, and instead, uses the moving average instead of time. 

MedianTGD is another method proposed for improving the robustness of gradient descent against outliers \cite{chi2019median}. They suggested using the median metric to separate outlier samples, since the median metric is effective in detecting outliers. At each iteration, the median-TGD eliminates the contributions of samples that differ significantly from the sample median to stabilize the direction of the search while performing the update. Authors present numerical experiments that show median-TGD to have superior performance.

The co-teaching method was proposed to combat noisy labels, which simultaneously trains two different models\cite{han2018co}. The models  communicate with each other during training about which samples to trained for each mini-batch. The aim is to remove noisy samples in this way. The robustness of their method was demonstrated on noisy image datasets.

Trimmed Loss Minimization is another study that only includes samples with the smallest loss in training \cite{shen2019learning}. In this method, the smallest losses are selected on an epoch basis, not on a mini-batch basis. They showed that in the early stages of training, the models were more accurate on clean samples. In their experiments, they demonstrated the effectiveness of their method for deep image classifiers with label noise, generative adversarial networks with bad training images, and backdoor attacks. 

Label noise can also be used to regularize deep neural networks\cite{nakamura2020regularization}. Before the mini-batch update, a certain rate of random label noise is added to the samples. And samples with current loss above and below a certain threshold are trimmed, and back propagation is performed. Label-Noised Trim-SGD yielded more robust results for some datasets and models.

Some papers recommend the use of augmentation methods for labeling noisy datasets \cite{arazo2019unsupervised, yang2021learning}.
It is attempted to predict noisy samples more accurately by using augmented data.
However, these studies and methods are limited to image datasets since text augmentation do not yield satisfactory results as on images.

In contrast, Adaptive-k algorithm does not use any augmentation methods and is not specific to image datasets and can be used for any kind of dataset. Additionally, Adaptive-k does not require any prior information about the noise ratio. Therefore, there is no need to tune the hyperparameters. Adaptive-k does not involve heavy calculations; the operations it contains are negligible when compared to mini-batch updates. In addition, it is not necessary to use a pre-trained model or train another model in parallel, as with co-teaching methods, in the Adaptive-k algorithm. As a result, it has a negligible impact on the cost of training.

\section{Method}

Our method consists of two steps: the vanilla process and the adaptive process. The first step, the vanilla process, includes a training process in which all samples participate in the training. The vanilla process is followed by the adaptive phase. In the adaptive process, after the samples are selected with an adaptive threshold value, training is performed with only the selected samples. 

We observed that not using vanilla method at the beginning of the training differed the loss distributions of noisy and clean samples.
we therefore recommend the vanilla process before an adaptive approach in the early stages of training.
Similarly, when we apply the vanilla process before MKL, we see that the results are superior to the standard MKL. As a result, another approach we recommend is to use vanilla and MKL together.

The vanilla process aims to reduce sample losses to a significant level, that is, to the point where the losses of noisy samples are generally high and the losses of clean samples are generally low. 
Following the vanilla process, we begin to distinguish between samples using an adaptive threshold. This threshold value is determined using the moving average calculated in each mini-batch.

Having the gradient of all samples would make it easy to distinguish between noisy and clean samples. However, considering the large datasets of today, it is not possible to calculate gradients of all samples. Therefore, we make this problem computationally tractable by utilizing the moving average to extract a general characteristic of all samples. If we only looked at the mean of the incoming mini-batch, this metric would have misled us. Making a selection using only the mean of the incoming mini-batch will result in undesirable situations, such as not receiving some clean samples in a mini-batch with a majority of clean samples, or some noisy samples participating in the training process in a mini-batch with a majority of noisy samples. Therefore, we argue that it is a more appropriate metric to use the moving average calculated using the mean of the mini-batches to make an accurate distinction. We update the threshold value using the mean of each mini-batch by using a moving average, and in this way we try to catch the appropriate threshold. By using the threshold, we make a choice for which samples to use for an update. And we perform the update process by including the selected samples in the training process. We were inspired by the Adam optimization algorithm\cite{kingma2014adam} to calculate the moving average. The Adaptive-k algorithm differs from the MKL algorithm in the way it determines the number of k. A fixed value of k is initially set in MKL, and k samples are used within each mini-batch. Conversely, Adaptive-k uses a different number of samples from each mini-batch based on the threshold it determines. Training algorithm of Adaptive-k is given in Algorithm-2 and training algorithm of MKL is given in Algorithm-1.

\begin{algorithm}[h]
\caption{Adaptive-k}
\begin{algorithmic}[1]
\Require
\State $T$ : Epoch count.
\State $N$ : Iteration count.
\State $D$ : Training dataset.
\State $f$ : Loss function.
\State $ \beta_1, \beta_2 \in ({0,1}] $ : Exponential decay rates for the m and the v estimates.(Suggested default: 0.9 and 0.999 respectively)
\State $\varepsilon$ : Small constant used for numerical stabilization.(Suggested default: 10e-8)

\Statex

\Procedure {train}{}
\State $m_0\leftarrow$  0
\State $v_0\leftarrow$  0
\For {$i \in \{1...T\}$}
\For {$j \in \{1...N\}$}
\State Fetch mini-batch $\overline{D}$ from $D$
\State $losses\leftarrow$ $f(\overline{D})$ 
\State  $\mu_b\leftarrow$ \Call{Mean}{$losses$} (Calculating the average loss for the mini-batch)
%\Call{CalcSquare}{$a$}
\State $k = ((i-1)*T+j)$
\State $m_{(k)}\leftarrow$ $\beta1 \times m_{(k-1)} + (1-\beta_1) \times \mu_b$
\State $v_{(k)}\leftarrow$ $\beta2 \times v_{(k-1)} + (1-\beta_2) \times {\mu_b^2}$
\State $\mu_D \leftarrow m_{(k)} / $ $(\sqrt{v_{((k)}} +\varepsilon)$ (Estimating the average
loss for all samples)
\State $\overline{D'} \leftarrow \overline{D}[\overline{D} \leqslant \mu_D]$ (Selecting samples below threshold)
\State Update model with $\overline{D'}$

\EndFor
\EndFor

\EndProcedure
\end{algorithmic}
\end{algorithm}

\subsection{Understanding Adaptive-k}

We argue that the MKL method developed for a dataset containing label noisy data can be improved. The number k, which indicates the number of samples taken from the mini-batch in the method suggested by Vatsal Shah et al.\cite{shah2020choosing}, is an important parameter that must be determined well and appropriately for this method to yield the desired results. In the MKL method, the number k is used as a fixed number. In this case, we will need to determine this k parameter according to the noise ratio in the dataset. However, since we cannot predict the noise ratio in the datasets, we can only find the number k by testing it. This is a laborious process and may not also be a definitive solution. 

As tested in \cite{shah2020choosing}, when the k number is determined based on the noise ratio, this is an optimum solution for MKL, but it is far from being an optimum solution, in general. We can see the precision and recall shapes that occur when we perform these tests, in order to show that this situation is not an optimum solution, in Figure-\ref{fig:u_1_prec_recall_for_MKL}. Precision is defined as the fraction of clean samples among the selected samples. Recall, on the other hand, is defined as the fraction of clean samples that were selected.

\begin{figure}[h]
\begin{center}
\includegraphics[width=\columnwidth]{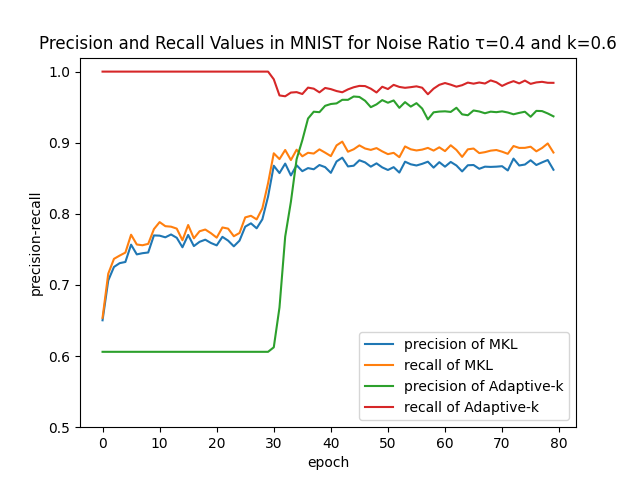}
\caption{Precision and Recall Values in MNIST for Noise Ratio $\tau = 0.4$ and $k=0.6$. For the MNIST dataset, after the first 30 epochs(Vanilla), the Adaptive-k process, which is 50 epochs, is started. Therefore, the first 30 epochs of Adaptive-k show the precision and recall values in the vanilla process.}
%\centering
\label{fig:u_1_prec_recall_for_MKL}
\end{center}
\end{figure}

For label noise datasets, our goal should be to convergence precision and recall values closer to one if our purpose is to distinguish clean data from noisy data. Figure-\ref{fig:u_1_prec_recall_for_MKL} shows the average precision and recall values calculated in each iteration in an epoch. When we look at Figure-\ref{fig:u_1_prec_recall_for_MKL}, precision and recall values are far from 1 even in the most optimal case for MKL in the MNIST dataset. 

As mentioned earlier, we cannot foreknow how much label noise a dataset contains in real life. Even in the case where we assume that we foreknow the label noise ratio and determine the k number accordingly, we see in Figure-\ref{fig:u_1_prec_recall_for_MKL} that the noisy-noiseless distinction could not be made clearly with MKL. In simple terms, if a\% of the dataset is label noisy, it seems like a good solution to determine k so that (100-a)\% is chosen at first glance. However, this situation is may not as simple as it seems. a\% of a dataset is label noisy does not mean that a\% of mini-batches will also be label noisy. The same situation can also be seen in Figure-\ref{fig:u_2_rates}. Figure-\ref{fig:u_2_rates} shows the clean sample rate in a mini-batch and the sample rates selected by different approaches from that mini-batch. The clean sample rate is different in each mini-batch. If this were the situation, MKL would have been a definitive solution, assuming that we know that the losses of the noisy data are higher than the noiseless data, and we also know the noise ratio in the dataset. Because of this situation, even determining the k number according to the noise ratio does not give us the desired optimum solution. As we can see in Figure-\ref{fig:u_2_rates}, a fixed k number will cause situations that we do not want to happen, such as some noisy samples participating in the training process if the majority of the incoming mini-batch is noisy, or some noiseless samples not participating in the training process in the mini-batch consisting of mostly noiseless samples. While MKL selects a fixed number of k samples, our proposed Adaptive-k method chooses according to the distribution of the mini-batch in that iteration. We can see this situation in Figure-\ref{fig:u_2_rates}.

\begin{figure}[h]
\begin{center}
%\centering
\includegraphics[width=\columnwidth]{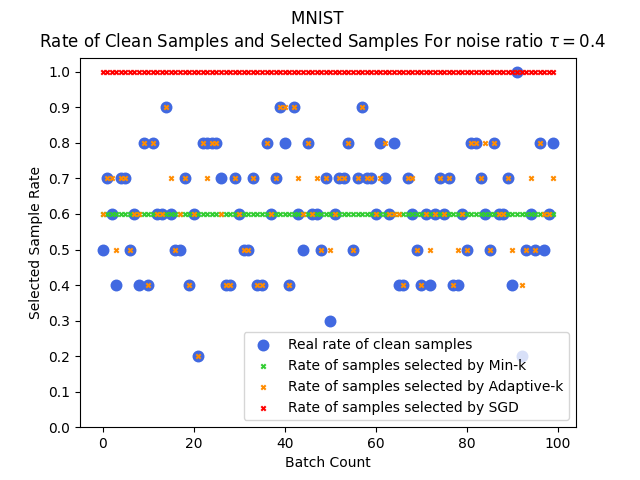}
\caption{Rate of Clean Samples and Selected Samples in MNIST for Noise Ratio $\tau = 0.4$. This figure shows the iterations in any epoch during the training phase.}
\label{fig:u_2_rates}
\end{center}
\end{figure}

The study\cite{shah2020choosing} also shows that it is possible to get even more successful results by perfectly separating the noiseless data from the noisy data. A perfect distinction between clean and noisy samples is referred to as the Oracle method.
Our motivation, due to the fact that we do not know in advance how much label noise the datasets contain and, as we mentioned earlier, some negative aspects of MKL, we think that adaptively determining the number of k in each mini-batch will be an important step in the development of the current MKL method and in this way we will approach the Oracle level. Therefore, our goal with this study is to reach the Oracle level by making a near-perfect distinction. We argue that instead of taking a fixed number of samples in the mini-batch during the training process, choosing an adaptive threshold value will yield better results. In this way, we believe that we will approach even more the Oracle level.

Thanks to our proposed method, an adaptive threshold value will be determined, and a selection will be made among the samples in the mini-batch according to this threshold value. Thus, deciding a fixed number of samples from the mini-batch will be prevented, and a selection will be made according to the current mini-batch status.

\section{Experiments}
In order to test our proposed method, we performed tests with datasets consisting of image and text data. While we used datasets that are frequently used in deep learning such as subsampled MNIST\cite{deng2012mnist}(5000 training samples, 10000 test samples), subsampled Fashion-MNIST\cite{xiao2017fashion}(5000 training samples, 10000 test samples) and CIFAR-10\cite{krizhevsky2009learning}(50000 training samples, 10000 test samples) as image datasets, we used datasets from IMDB\cite{imdb}(25000 training samples, 25000 test samples), Sarcasm\cite{sarcasm, book}(25757 training samples, 2862 test samples), Hotel\cite{hotel}(13128 training samples, 6763 test samples) and 20newsgroup\cite{20_news}(15998 training samples, 3999 test samples) as text datasets. We randomly added label noise via directed label noise to the datasets that we used in our tests.

The results are the average of the maximum test accuracies obtained during the training over 3 runs. The results are given in the Table-\ref{tbl:table}. Taking into account different noise ratios, we ran tests for 28 different situations in 7 different datasets. Adaptive-k obtained higher test accuracy from vanilla in 21, and MKL in 27 of 28 tests. In 18 results, Adaptive-k was closest to Oracle in terms of test accuracy.

% caption oldu mu?
\begin{figure}[h]
\begin{center}
%\centering
\includegraphics[width=\columnwidth]{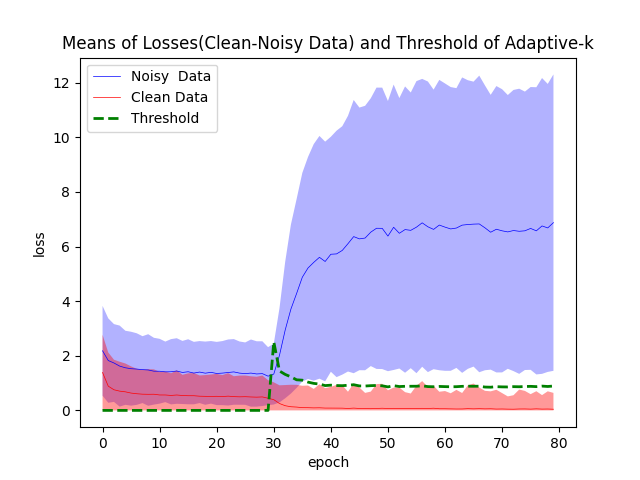}
\caption{The means of losses of clean and noisy samples in MNIST dataset for $\tau$=0.3 and threshold value determined by adaptive k for separation of these samples. The blue line shows the mean losses of noisy samples, the red line shows the mean losses of clean samples, and the green dashed line shows the calculated and used threshold value ($\mu_d$ in Adaptive-k Algorithm 2) threshold value. The shaded area shows the ranges as mean $\pm$ 1.5*standart deviation. In the MNIST dataset, after the 30th epoch, the Adaptive-k process is started.}
\label{fig:thres.png}
\end{center}
\end{figure}

% caption oldu mu?
\begin{table}[H]
\begin{center}
\includegraphics[width=\columnwidth]{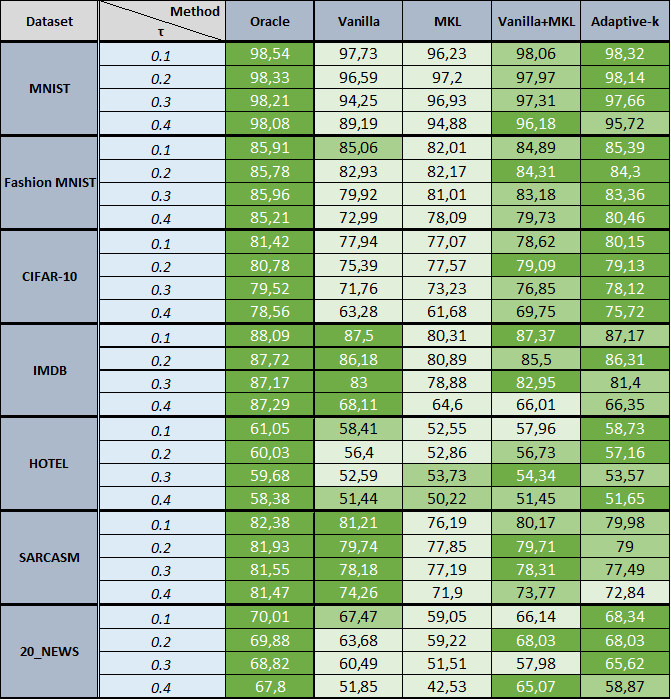}
\caption{Comparing of Adaptive-k and Vanilla+MKL that we propose in different datasets with Oracle, Vanilla and MKL on test accuracy. Shades of green colour indicate closeness to the Oracle level we are aiming(e.g. darker green means closer).}
\label{tbl:table}
\end{center}
\end{table}

\begin{figure}[h]
\begin{center}
%\centering
\includegraphics[width=\columnwidth]{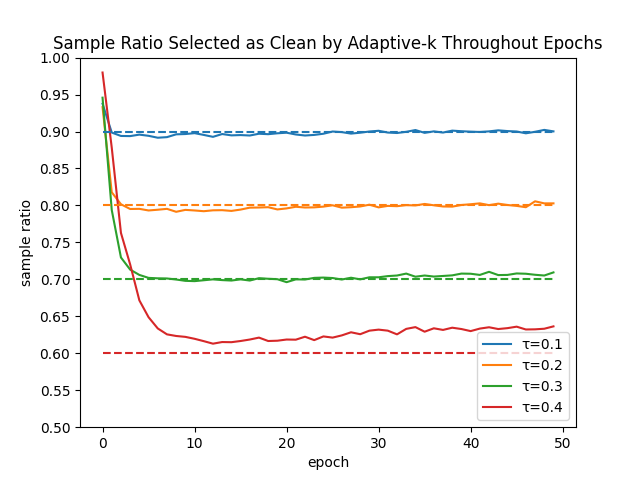}
\caption{shows only the adaptive process for the MNIST dataset. For the MNIST dataset, after the first 30 epochs(Vanilla), the Adaptive-k process, which is 50 epochs, is started. Solid lines indicate sample ratios determined as clean by Adaptive-k, while dashed lines indicate actual clean sample ratios.}
\label{fig:avg_k.png}
\end{center}
\end{figure}

Figure-\ref{fig:thres.png} shows threshold value of Adaptive-k during training. Considering that the losses of the unselected samples will increase, we can observe that we make a good distinction in Figure-\ref{fig:thres.png}.

To compare our proposed method with MKL, it would be meaningful to look at the precision and recall values of these approaches, shown in Figure-\ref{fig:u_1_prec_recall_for_MKL}. Adaptive-k's precision and recall values are higher than MKL. Naturally, this situation is also reflected in the results. Even in the most optimal case($\tau$=0.4, k=0.6) of MKL, our Adaptive-k method can give better results.

% anlatabildim mi?
Figure-\ref{fig:avg_k.png} shows the average of the k-numbers determined for the mini-batches in each epoch during the adaptive training process. Average of k corresponds to clean sample ratio estimation of Adaptive-k. With the Adaptive-k approach, we argue that k converges to the ideal point for the dataset. In addition, when we look at the average k numbers determined by Adaptive-k at controlled noise ratios for the MNIST dataset, we estimate how much a dataset contains label noise with high precision.

\section{Theoretical Analysis}

If deep learning models are trained on datasets containing label noise, then label noisy samples tend to have higher loss. In the absence of noisy samples, the classification performance of the model increases. Our threshold-based Adaptive-k algorithm seeks to avoid training with noisy samples and to use only noiseless samples during training. A comparison is made between the proposed algorithm and the Min-k Loss SGD algorithm, which selects a fixed number of samples with low loss in mini-batch, and the SGD (vanilla) algorithm, which selects all samples without making any distinction. In this analysis, we will call  Min-k Loss SGD algorithm as MKL, with k as the number of samples selected from a mini-batch. For the theoretical analysis, it is assumed that the noiseless and noisy samples come from two different normal distributions, with the noiseless samples having a lower mean. Samples in the dataset are come from a mixture of these two distributions. The mean square error (MSE) of the Adaptive-k, MKL, and vanilla methods was calculated based on the noiseless sample's distribution. And it has been demonstrated in which cases Adaptive-k has less error than other methods.

\subsection{Notations and Definitions}

% \begin{tabularx}{\linewidth}{ r X }
% {|l|l|}
\begin{table}[H]
\begin{adjustbox}{width=0.75\columnwidth,center}
\begin{tabularx}{\linewidth}{|l|X|}
\hline
\textbf{Notation} & \textbf{Description}                                         \\ \hline
$X_1$             & Clean Samples                                                \\ \hline
$X_2$             & Noisy Samples                                                \\ \hline
$\mu_1$           & Mean loss of clean samples                                   \\ \hline
$\mu_2$           & Mean loss of noisy samples                                   \\ \hline
$\sigma_1$        & Standard deviation of losses for clean samples               \\ \hline
$\sigma_2$        & Standard deviation of losses for noisy samples               \\ \hline
$\mathcal{N}$     & Normal Distribution                                          \\ \hline
$f_1(x)$          & Probability density function of losses for clean samples     \\ \hline
$f_2(x)$          & Probability density function of losses for noisy samples     \\ \hline
$F_1(x)$          & Cumulative distribution function of losses for clean samples \\ \hline
$F_2(x)$          & Cumulative distribution function of losses for noisy samples \\ \hline
$f_D(x)$          & Probability density function of losses for all samples       \\ \hline
$F_D(x)$          & Cumulative distribution function of losses for all samples   \\ \hline
$\mu_D$           & Mean loss of all samples                                     \\ \hline
$\sigma_D$        & Standard deviation of losses for all samples                 \\ \hline
$f_{adk}(x)$      & Probability density function of Adaptive-k distribution      \\ \hline
$\mu_{adk}$       & Mean of Adaptive-k distribution                              \\ \hline
$\sigma_{adk}$    & Standard deviation of Adaptive-k distribution                \\ \hline
$MSE_{adk}$       & Mean squared error of Adaptive-k distribution                \\ \hline
$f_{MKL}(x)$    & Probability density function of MKL distribution           \\ \hline
$\mu_{MKL}$     & Mean of MKL distribution                                   \\ \hline
$\sigma_{MKL}$  & Standard deviation of MKL distribution                     \\ \hline
$MSE_{MKL}$     & Mean squared error of MKL distribution                     \\ \hline
\end{tabularx}
\end{adjustbox}
\end{table}

Let $X_1$ represent noiseless samples and $X_2$ represent noisy samples.
$X_1$ is normally distributed with mean $\mu_1$ and standard deviation $\sigma_1$
$$X_1 \sim \mathcal{N}(\mu_1,\,\sigma_1^{2})$$
and 
$X_2$ is normally distributed with mean $\mu_2$ and standard deviation $\sigma_2$
$$X_2 \sim \mathcal{N}(\mu_2,\,\sigma_2^{2})$$

The pdf functions of $X_1$ and $X_2$ can be written as follows.

$$f_1(x) = \frac{1}{\sigma_1\sqrt{2\pi}} 
  \exp\left( -\frac{1}{2}\left(\frac{x-\mu_1}{\sigma_1}\right)^{\!2}\,\right)
$$

$$f_2(x) = \frac{1}{\sigma_2\sqrt{2\pi}} 
  \exp\left( -\frac{1}{2}\left(\frac{x-\mu_2}{\sigma_2}\right)^{\!2}\,\right)
$$

Similarly, CDF functions can be written as follows.

$$F_1(x) = \frac{1}{2}\left[1 + erf{(\frac{x - \mu_1}{\sigma_1 \sqrt{2}})}\right]
$$

$$F_2(x) = \frac{1}{2}\left[1 + erf{(\frac{x - \mu_2}{\sigma_2 \sqrt{2}})}\right]
$$

When the noise ratio is defined as $\tau$ and dataset is defined as $D$, the pdf, cdf, mean, and variance of the distribution of the dataset samples are shown in Equation-1-4
\begin{flalign}
f_D(x) = (1 - \tau).f_1(x) + \tau. f_2(x)
\end{flalign}
\begin{flalign}
F_D(x) = (1 - \tau).F_1(x) + \tau. F_2(x)
\end{flalign}
\begin{flalign}
\mu_D = (1 - \tau).\mu_1(x) + \tau. \mu_2(x)
\end{flalign}
\begin{flalign}
\sigma^2_D &= (1 - \tau) \sigma_1^2+\tau \sigma_\tau+(1 - \tau) (\mu_1 - \mu_D)^2 \\
& + \tau(\mu_2-\mu_D)^2 \nonumber
\end{flalign}

An example of the mixture distribution for $\mu_1 = 0$, $\sigma_1 = 1$, $\mu_2 = 5$, $\sigma_2 = 2$, $\tau = 0.4$   can be seen in Figure-\ref{fig: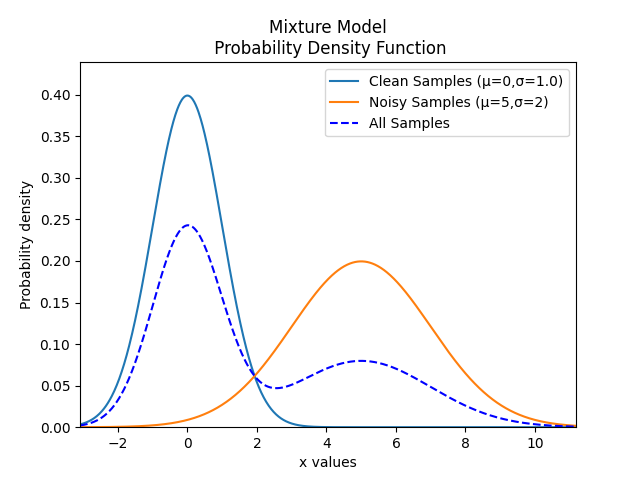}.

\begin{figure}[H]
\begin{center}
%\centering
\includegraphics[width=\columnwidth]{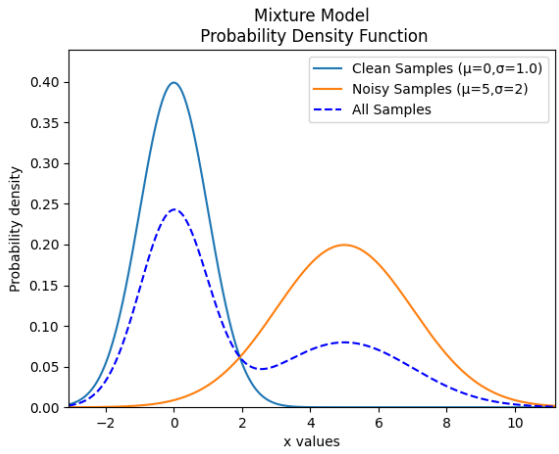}
\caption{Mixture distribution for noisy dataset where $\mu_1 = 0$, $\sigma_1 = 1$, $\mu_2 = 5$, $\sigma_2 = 2$, $\tau = 0.4$ }
\label{fig:Figure_1.png}
\end{center}
\end{figure}

\begin{observation}
As deep learning models are trained on label noisy datasets, label noisy samples tend to have higher loss. As a consequence, the mean loss of the label noisy samples is higher than clean samples.
\end{observation}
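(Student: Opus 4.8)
The plan is to justify this Observation through the learning dynamics of gradient-based optimization rather than as a self-contained closed-form theorem, because the claim concerns loss values produced \emph{during} training and therefore inherits the behaviour of the optimizer. The core mechanism I would formalize is the consistency of the clean signal: write each training pair as $(x,y)$, let $y^\ast(x)$ denote the ground-truth label, and note that clean samples satisfy $y=y^\ast(x)$ while noisy samples satisfy $y\neq y^\ast(x)$. Across the dataset the clean labels all agree with a single underlying map $x\mapsto y^\ast(x)$, so their loss gradients reinforce one another, whereas directed/random label noise injects targets that are mutually inconsistent and that also contradict the clean majority. A first-order (or early-phase) analysis of gradient descent then predicts that the averaged, coherent clean gradient dominates the update early in training, driving the model prediction toward $y^\ast$.

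Second, I would convert this into the loss ordering. Once $h_\theta(x)\approx y^\ast(x)$, a clean sample has $h_\theta(x)\approx y$ and hence small loss, while a noisy sample has $h_\theta(x)\approx y^\ast(x)\neq y$ and hence a loss that stays bounded away from zero. Averaging over each population yields $\mu_1<\mu_2$, which is exactly the distributional assumption the later analysis relies on. To make this rigorous without invoking full network dynamics, I would pass to a tractable surrogate — e.g.\ a linear/least-squares predictor, or the NTK/kernel-regression regime — fit to the mixture, and show that the expected residual on mislabeled points exceeds that on correctly labeled points; this gives a clean, provable instance of $\mu_2>\mu_1$ that I would present as representative rather than grinding through the general case.

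The main obstacle is that the ordering $\mu_1<\mu_2$ is \emph{not} preserved for all training time: a sufficiently expressive network trained to convergence can memorize arbitrary labels and drive the noisy loss to zero as well, as established in \cite{zhang2021understanding}. Consequently any honest argument must restrict attention to the pre-memorization (early-to-mid) phase, and the delicate part is quantifying the window during which the gap holds and ruling out pathological cases where a large, highly overparameterized model memorizes noise almost immediately. I would address this by (i) stating the claim for the early-learning regime, where the memorization effect is documented empirically in \cite{liao2020weakly, ortego2021towards, arazo2019unsupervised}, and (ii) noting that the Adaptive-k method itself operates in this regime after the short vanilla warm-up, so the assumption $\mu_2>\mu_1$ is the relevant one for the subsequent theoretical comparison of Adaptive-k, MKL, and vanilla SGD.
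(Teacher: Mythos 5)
The paper does not actually prove this statement: it is presented as an \emph{Observation}, i.e., an empirical premise that the subsequent theorems take as given. Its only support in the paper is the citation of prior empirical work on the early-learning phenomenon and the experimental evidence in the figures (the loss-distribution plots for clean versus noisy samples during training). Your proposal therefore takes a genuinely different route from the paper by attempting a theoretical derivation of the ordering $\mu_2 > \mu_1$ from gradient-descent dynamics. What your approach buys is a principled account of \emph{why} the ordering holds and, crucially, \emph{when} it holds: you correctly identify that the claim is false at convergence for overparameterized networks (which can memorize arbitrary labels) and restrict it to the pre-memorization window, which is exactly the regime in which Adaptive-k operates after its vanilla warm-up. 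That caveat is something the paper leaves implicit. What the paper's route buys is simplicity and breadth: by treating the ordering as an observed regularity, it sidesteps the fact that a rigorous version of your argument is genuinely hard — the "coherent clean gradient dominates" step and the surrogate-model residual bound are stated in your plan but not carried out, and making them precise (e.g., quantifying the length of the early-learning window as a function of model capacity and noise ratio) is an open research problem rather than a routine lemma. As submitted, your text is a proof \emph{plan} with the two load-bearing steps (gradient coherence implies convergence toward $y^\ast$, and the NTK/linear-regression residual comparison) left as claims; if you intend it as a proof, those must be filled in, whereas if you intend it as a justification for adopting the observation as an assumption, it is a strictly more informative framing than the paper's own.
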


\begin{observation}
Model's classification performance increases in the absence of label noisy samples.
\end{observation}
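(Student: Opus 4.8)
The plan is to treat the statement not as a standalone empirical claim but as a corollary of the loss-distribution model established above, by reducing the informal notion of \emph{classification performance} to a tractable surrogate quantity. Concretely, I would posit that the test-time classification error is a monotonically increasing function of the expected loss evaluated on the clean distribution $f_1$, on the grounds that the training loss is a calibrated surrogate for the $0$--$1$ misclassification error. Under this reduction, the claim becomes equivalent to showing that the training regime which discards $X_2$ attains a smaller expected clean-data loss than any regime retaining a positive fraction $\tau>0$ of noisy samples.

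First I would characterize the loss distribution that each regime induces at its stationary point. Gradient descent drives the parameters toward a minimizer of the empirical risk over the samples actually used in the updates. When the noisy samples are excluded---the oracle regime---the induced loss distribution over the clean population is $f_1$, whose mean $\mu_1$ is the smallest attainable clean mean loss. When a fraction $\tau$ of the updates comes from $X_2$, the parameters instead minimize risk against the mixture $f_D=(1-\tau)f_1+\tau f_2$ of Equation~1, so the clean-population mean loss is pulled upward toward $\mu_2$.

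Next I would invoke the first observation, namely $\mu_2>\mu_1$, to show that this upward pull is strictly harmful. Because the mixture-optimal parameters trade off clean against noisy loss, the resulting clean mean loss is a convex combination that sits strictly above $\mu_1$ for every $\tau>0$; equivalently, in the MSE framework used later, the induced distribution departs from $f_1$ and incurs positive excess error. Combined with the monotone-surrogate assumption, this yields the desired conclusion: the oracle is optimal and performance degrades monotonically as $\tau$ grows.

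The hard part will be justifying the reduction in the first step, since the passage from \emph{lower expected loss on $f_1$} to \emph{higher accuracy} is not a formal deduction but relies on the surrogate being calibrated, and the assertion that updates from $X_2$ cannot improve the fit to $X_1$ requires the two loss distributions to be separated enough that their gradient contributions genuinely conflict. Rather than derive these, I would state them explicitly as modeling assumptions---both are natural in the Gaussian-mixture regime with $\mu_2>\mu_1$ adopted here, whereas a miscalibrated loss or heavily overlapping distributions could break the monotonicity---after which the remaining steps are routine.
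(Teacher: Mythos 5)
The first thing to say is that the paper offers no proof of this statement at all: it is labelled an \emph{observation}, and like Observation~1 it functions as an empirical premise for the subsequent theorems, justified by the experimental results (the Oracle rows in Table~1) and by the cited literature on noisy-label memorization, not by derivation. So any proof you write is necessarily a different route from the paper's, which is simply to assert it.

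Your formalization is a reasonable way to give the claim mathematical content, and you are commendably explicit about where the logical weight sits. But you should recognize that the two steps you defer to ``modeling assumptions'' are not peripheral regularity conditions --- they \emph{are} the observation. The paper's probabilistic framework describes the loss distribution of samples under a \emph{fixed} model at a given training stage ($f_1$, $f_2$, $f_D$ are distributions of losses, with $\mu_1$, $\mu_2$ parameters of that snapshot); it contains no mechanism relating a choice of training regime to the loss distribution that regime induces at convergence. Your key step --- that minimizing risk against the mixture $f_D$ pulls the clean-population mean loss strictly above $\mu_1$ --- therefore cannot be derived inside this framework; it requires an argument about the optimization dynamics and the hypothesis class (e.g.\ that fitting $X_2$ necessarily trades off against fitting $X_1$, which fails for a sufficiently overparameterized interpolating model on the training set, and holds for generalization only via a capacity or memorization argument the paper never makes). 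Likewise the surrogate-calibration step from expected clean loss to accuracy is assumed, not shown. The net effect is that your ``proof'' relocates the observation into two named assumptions rather than discharging it --- which is an honest and arguably clearer presentation than the paper's bare assertion, but you should not present it as establishing something the paper leaves unproven.
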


\begin{theorem}
According to clean sample's distribution, the expected error of applying MKL algorithm (while updating selects the smallest k samples in mini-batch) is smaller than SGD.
\begin{equation}
    MSE_{MKL} < MSE_{SGD}
\end{equation}
\end{theorem}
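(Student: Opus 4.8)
The plan is to treat the MKL update as a one-sided truncation of the mixture $f_D$ and reduce the claim to a single conditional-expectation inequality. Selecting the $k$ smallest losses in a mini-batch of size $B$ corresponds, in the population limit, to keeping only the samples whose loss falls below a cutoff $c$, where $c$ is the $(k/B)$-quantile of $f_D$; the effective MKL law is then the truncated mixture with density $f_D(x)/F_D(c)$ on $\{x\le c\}$ and zero above. Reading $MSE_{SGD}$ and $MSE_{MKL}$ as the second moments of the loss about the clean mean $\mu_1$ under $f_D$ and $f_{MKL}$, both are expectations of $g(x)=(x-\mu_1)^2$; equivalently, via the bias--variance split $MSE_{\bullet}=(\mu_\bullet-\mu_1)^2+\sigma_\bullet^2$, which is how I would later extract $\mu_{MKL}$ and $\sigma_{MKL}$ from the truncated-mixture moment formulas if an explicit expression is wanted.

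First I would split the SGD integral at the cutoff. Writing $p=F_D(c)$ and letting $E_{\le}$ and $E_{>}$ denote the conditional expectations of $g$ on $\{X\le c\}$ and $\{X>c\}$, the law of total expectation gives
$$MSE_{SGD}=p\,E_{\le}+(1-p)\,E_{>},\qquad MSE_{MKL}=E_{\le},$$
so that
$$MSE_{SGD}-MSE_{MKL}=(1-p)\bigl(E_{>}-E_{\le}\bigr).$$
Since $0<p<1$ for any nontrivial $k$, the entire theorem collapses to the single inequality $E_{>}>E_{\le}$: the discarded high-loss tail must be, on average, farther from $\mu_1$ in squared distance than the retained low-loss part.

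Second I would establish $E_{>}>E_{\le}$ using Observation~1. Because the noisy component has the larger mean $\mu_2>\mu_1$, it is the upper tail, above the cutoff, that MKL discards, and since $g$ is strictly increasing for $x>\mu_1$, every discarded point obeys $g(x)>(c-\mu_1)^2$. For a single Gaussian $\mathcal N(\mu_1,\sigma_1^2)$, writing $z=(c-\mu_1)/\sigma_1$ and letting $\phi,\Phi$ be the standard normal pdf and cdf, the closed forms $E_{>}=\sigma_1^2\bigl(1+z\phi(z)/(1-\Phi(z))\bigr)$ and $E_{\le}=\sigma_1^2\bigl(1-z\phi(z)/\Phi(z)\bigr)$ cross exactly at $z=0$ and satisfy $E_{>}>E_{\le}$ for all $z>0$; superimposing the noisy component, whose mass lies above $\mu_1$, then only widens the gap, so the inequality persists for the mixture.

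The main obstacle is precisely this sign step, together with its hidden hypothesis $c>\mu_1$. The inequality genuinely reverses when the cutoff drops below the clean mean: truncating a clean Gaussian below $\mu_1$ retains only its far left tail, whose squared deviation exceeds the full variance, yielding $MSE_{MKL}>MSE_{SGD}$. Hence the honest statement requires $c>\mu_1$, i.e.\ $k/B>F_D(\mu_1)$; and since $F_1(\mu_1)=\tfrac12$ and $F_2(\mu_1)<\tfrac12$ force $F_D(\mu_1)<\tfrac12$, any $k$ that keeps at least half the batch guarantees the hypothesis. I would therefore either record this mild condition on $k$ or argue that the vanilla warm-up phase places the operating cutoff safely above $\mu_1$, after which the reduction above closes the proof.
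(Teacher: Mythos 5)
Your argument is a genuinely different route from the paper's. The paper constructs $f_{MKL}$ exactly, as the average of the first $k$ order-statistic densities from $n$ draws of the mixture $f_D$, writes $MSE_{MKL}=(\mu_1-\mu_{MKL})^2+\sigma_{MKL}^2$, and then establishes the inequality by \emph{numerically} integrating and plotting $MSE_{SGD}-MSE_{MKL}$ over $(\mu_2,\sigma_2)$ for fixed $n=10$, $k=6$, concluding from the plotted green region (plus the qualitative remark that discarding the largest losses pulls the selected distribution toward the clean one). You instead pass to the population limit, model MKL as a hard truncation of $f_D$ at the $(k/B)$-quantile $c$, note that both MSEs are second moments of $(x-\mu_1)^2$, and reduce the whole theorem to $E_{>}>E_{\le}$ via $MSE_{SGD}-MSE_{MKL}=(1-p)\bigl(E_{>}-E_{\le}\bigr)$. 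That reduction is correct and buys something the paper's surface plots do not: an explicit hypothesis, $c>\mu_1$ (equivalently $k/B>F_D(\mu_1)$), outside of which the inequality genuinely reverses --- a condition the paper never states even though it is implicit in its figure.

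Two caveats keep your sketch from being a complete proof of the paper's statement. First, the truncated mixture is only the large-$n$ limit of the paper's finite-$n$ order-statistic density (for finite $n$ the cutoff is random and the density is smoothed near $c$), so your identity is exact for a slightly idealized model rather than for the $f_{MKL}$ the paper actually defines; this should be flagged or bridged. Second, the key sign step $E_{>}>E_{\le}$ is established in closed form only for the pure clean Gaussian; the assertion that superimposing the noisy component ``only widens the gap'' is plausible but not argued. The noisy mass that falls below $c$ concentrates near $c$ and inflates $E_{\le}$ toward $(c-\mu_1)^2$, so the mixture case needs an actual estimate (e.g.\ writing $E_{\le}^{mix}$ as a weighted average of the clean truncated second moment and the noisy one, and bounding each below $E_{>}^{mix}$). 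Since the paper itself settles this step only by numerical evaluation, your proposal is no less rigorous than the original proof, but the mixture comparison is precisely where the content of the theorem lives and should not be waved through.
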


\begin{proof}
MSE can be calculated as the sum of the variance and squared bias of the estimator. SGD's mean is defined as the mixture distribution created by the noisy and clean sample distributions. Therefore, the bias of the SGD according to the clean sample distribution can be calculated as follows.
\begin{equation}
    Bias(SGD) = (\mu_D - \mu_1)^2
\end{equation}
Equation-4 gives the variance of the SGD. Accordingly, the MSE of SGD is as follows.

\begin{flalign}
\sigma^2_D &= (1 - \tau) \sigma_1^2+\tau \sigma_\tau+(1 - \tau) (\mu_1 - \mu_D)^2 \\
& + \tau(\mu_2-\mu_D)^2 \nonumber  \\ 
& Var(SGD) = \sigma^2_D \nonumber 
\end{flalign}
% \begin{flalign}
% Var(SGD) = \sigma^2_D \nonumber
% \end{flalign}

\begin{flalign}
MSE_{SGD} = (\mu_D - \mu_1)^2 + \sigma^2_D
\end{flalign}

The MKL algorithm selects k samples with the smallest loss from a mini-batch with n samples. By using the order statistics pdf, it can be constructed the pdf of the MKL algorithm. A sample's $k^{th}$ order statistic is defined as its $k^{th}$ smallest value. $k^{th}$ order statistic's pdf is written as follows where $f_k(x)$ is pdf of $k^{th}$ order statistic, n is the number of observations in a mini-batch.
\begin{equation}
f_k(x) = n.f_D(x).{n - 1 \choose k - 1}. F_D(x)^{k-1}.(1-F_D(x))^{n-k}
\end{equation}
The pdf of MKL is simply the arithmetic mean of the order statistic from 1 to k, since the MKL algorithm only uses the smallest k samples instead of the $k^{th}$ the smallest sample and calculated as follows.
\begin{flalign}
f_{MKL}(x) &= \frac{1}{k} \sum_{p=1}^{k} n.f_D(x).{n - 1 \choose p - 1}. \\
& F_D(x)^{p-1}.(1-F_D(x))^{n-p} \nonumber
\end{flalign}

An example of the MKL distribution for $\mu_1 = 0$, $\sigma_1 = 1$, $\mu_2 = 5$, $\sigma_2 = 2$, $\tau = 0.4$ can be seen in Figure-\ref{ref: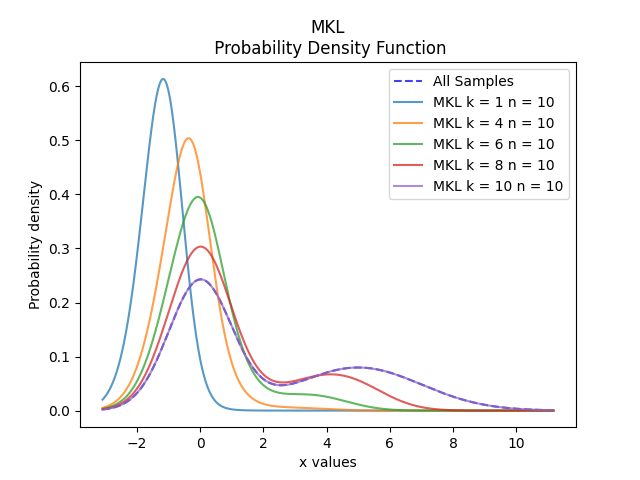}.

\begin{figure}[H]
\begin{center}
%\centering
\includegraphics[width=\columnwidth]{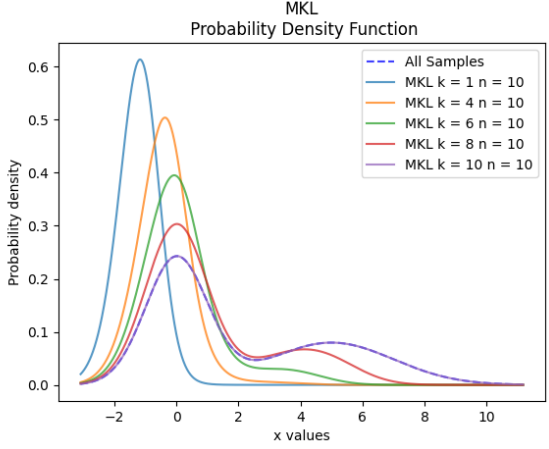}
\caption{MKL distribution's pdf for noisy dataset where $\mu_1 = 0$, $\sigma_1 = 1$, $\mu_2 = 5$, $\sigma_2 = 2$, $\tau = 0.4$ }
\label{ref:Figure_3.png}
\end{center}
\end{figure}

Figure-\ref{ref:Figure_3.png} shows the probability density function of the MKL distribution for different values of k.

The mean and variance of the MKL distribution must be computed in order to calculate the MSE of the MKL algorithm.

The MKL distribution's mean:

\begin{equation}
\mu_{MKL} =  \int_{-\infty}^{\infty}x.f_{MKL}(x) \,dx\
\end{equation}

The MKL distribution's variance:

\begin{equation}
\sigma_{MKL}^2 =  \left(\int_{-\infty}^{\infty}x^2.f_{MKL}(x) \,dx\right) - \mu_{MKL}^2
\end{equation}

Therefore, The MSE of MKL is calculated as follows.

\begin{equation}
MSE_{MKL} = (\mu_1 - \mu_{MKL})^2 + \sigma_{MKL}^2
\end{equation}

Due to this, the clean samples are assumed to have standard normal distribution and the difference between SGD and MKL errors is plotted for k = 6, n = 10, with noisy samples' averages higher. MKL's MSE is below SGD in the green area. Figure-\ref{ref: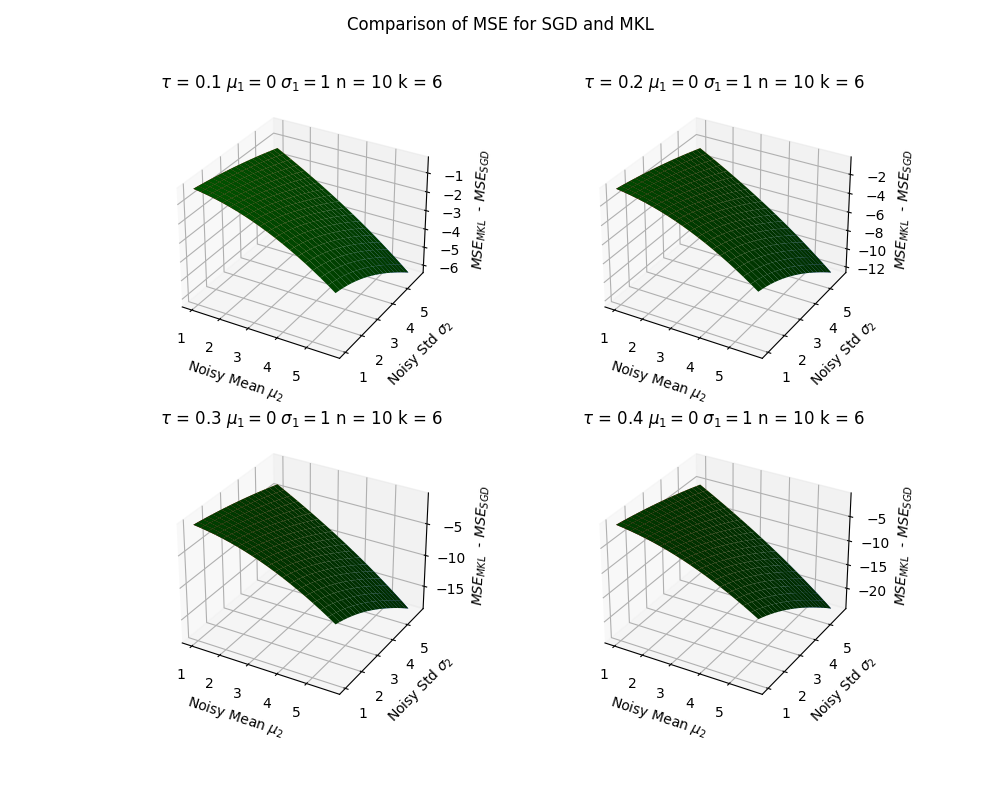} shows the case where the MSE of the green area MKL is lower than the SGD. As seen in Figure-\ref{ref:Figure_5.png}, MKL has a lower MSE than SGD if the mean of noisy samples is greater than the mean of clean samples.

% \begin{figure}[H]
% \begin{center}
% %\centering
% \includegraphics[width=\columnwidth]{Figure_5.png}
% \caption{Adaptive-k distribution's pdf for noisy dataset where $\mu_1 = 0$, $\sigma_1 = 1$, $\mu_2 = 5$, $\sigma_2 = 2$, $\tau = 0.4$ }
% \label{ref:Figure_5.png}
% \end{center}
% \end{figure}

\begin{figure}[H]
\begin{center}
%\centering
\includegraphics[width=\columnwidth]{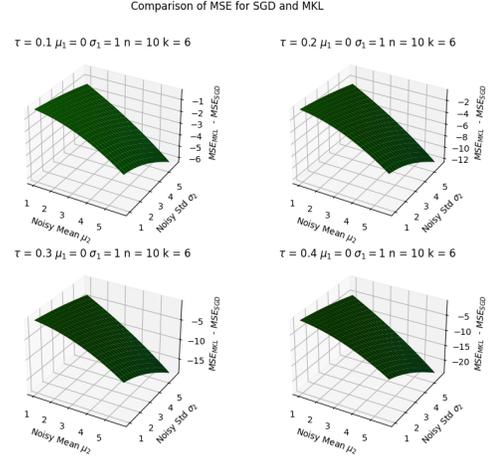}
\caption{Comparison of MSE for SGD and MKL}
\label{ref:Figure_5.png}
\end{center}
\end{figure}

Therefore,
\begin{equation}
    MSE_{MKL} < MSE_{SGD}
\end{equation}

Since all samples are a mixture of clean and noisy samples with low mean loss of clean samples, selecting the smallest k of the selected samples instead of all samples when sampling reduces the mean and variance of the mixture distribution. Consequently, the MKL distribution is more similar to the distribution of clean samples than the mixture distribution, and as a result, the MKL distribution has a lower MSE.

\end{proof}

\begin{theorem}
According to clean sample's distribution, the expected error of applying Adaptive-k algorithm (updating with samples less than a certain threshold value in mini-batch) is smaller than MKL algorithm.
\begin{equation}
    MSE_{adk} < MSE_{MKL}
\end{equation}
\end{theorem}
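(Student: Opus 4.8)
The plan is to mirror the structure of the proof of Theorem 1. First I would construct the probability density function of the Adaptive-k distribution. Since Adaptive-k retains exactly those samples whose loss falls below the adaptive threshold $\mu_D$, the resulting distribution is the mixture $f_D$ truncated at $\mu_D$ and renormalized,
\begin{equation}
f_{adk}(x) = \frac{f_D(x)}{F_D(\mu_D)}, \qquad x \leq \mu_D,
\end{equation}
with $f_{adk}(x)=0$ for $x>\mu_D$. From this I would read off the mean $\mu_{adk} = \frac{1}{F_D(\mu_D)}\int_{-\infty}^{\mu_D} x\,f_D(x)\,dx$ and the variance $\sigma_{adk}^2 = \frac{1}{F_D(\mu_D)}\int_{-\infty}^{\mu_D} x^2 f_D(x)\,dx - \mu_{adk}^2$, evaluating the integrals by applying the standard truncated-normal identities separately to the clean and noisy components $f_1$ and $f_2$. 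The error relative to the clean distribution is then $MSE_{adk} = (\mu_1 - \mu_{adk})^2 + \sigma_{adk}^2$, exactly paralleling the expression already derived for $MSE_{MKL}$.

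Next I would put both methods on a common footing by observing that selecting the smallest $k$ of $n$ samples is, in expectation, equivalent to truncating the mixture at the fixed quantile threshold $t_{MKL}=F_D^{-1}(k/n)$, whereas Adaptive-k truncates at $\mu_D$. Because $\mu_D=(1-\tau)\mu_1+\tau\mu_2$ is a convex combination, Observation 1 gives $\mu_1 < \mu_D < \mu_2$, so the adaptive cutoff sits in the overlap region between the two modes; the truncated distribution $f_{adk}$ therefore retains almost all of the clean mass while discarding almost all of the noisy mass. This is the mechanism I would argue drives the inequality: it pulls $\mu_{adk}$ closer to $\mu_1$ and shrinks $\sigma_{adk}^2$ relative to their MKL counterparts, so that both the squared-bias and variance terms of $MSE_{adk}$ fall below those of $MSE_{MKL}$.

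To make the comparison concrete I would, as in Theorem 1, fix the clean samples to a standard normal distribution and plot the difference $MSE_{MKL}-MSE_{adk}$ over the admissible range of noisy-sample means $\mu_2>\mu_1$ and representative values of $\tau,\sigma_2,k,n$, identifying the region in which the difference is positive. The main obstacle is that neither error admits a closed form: $MSE_{MKL}$ carries the sum of order-statistic integrals from Equation~(11), and $MSE_{adk}$ carries truncated-mixture integrals of error functions, so a fully analytic inequality is intractable. The realistic route is therefore a hybrid argument: reduce MKL to its equivalent truncation threshold $t_{MKL}$, establish monotonicity of the truncated MSE as a function of the cutoff, and show that the MSE-minimizing cutoff lies inside the overlap region near $\mu_D$, so that the adaptive threshold yields a strictly smaller error than the fixed $k/n$ quantile — which cannot adapt to the per-mini-batch clean ratio — throughout the relevant parameter regime.
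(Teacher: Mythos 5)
Your proposal matches the paper's argument essentially step for step: the same truncated-and-renormalized mixture pdf $f_{adk}(x)=f_D(x)/F_D(\mu_D)$ for $x\le\mu_D$, the same bias-plus-variance decomposition of $MSE_{adk}$, and the same fallback to a numerical plot of $MSE_{MKL}-MSE_{adk}$ over $(\mu_2,\sigma_2,\tau)$ with the clean distribution fixed to standard normal, since the erf terms preclude a closed form. The only addition --- recasting MKL as truncation at the quantile $F_D^{-1}(k/n)$ and arguing monotonicity of the truncated MSE in the cutoff --- is a reasonable sketch the paper does not attempt, but note that, like the paper, you would end up establishing the inequality only on the region of parameter space where the two loss distributions are well separated (the paper itself concedes MKL wins when the noise mean is low, the noise ratio is small, or the noise variance is large), so neither route yields the unconditional inequality stated in the theorem.
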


\begin{proof}

The Adaptive-k algorithm's distribution corresponds to the samples below the mixture distribution's mean. 
Equation-16 defines the pdf of Adaptive-k distribution.

\begin{equation}
f_{adk}(x) = 
\begin{cases}
f_D(x).\frac{1}{F_D(\mu_D)}&\text{if $x<=\mu_D$}\\
0&\text{else}
\end{cases}
\end{equation}

If $x$ is greater than the mean of the mixture distribution, it is 0, if the $x$ is less than or equal to the mean of the mixture distribution, the mixture distribution's normalized pdf is used. 

An example of the Adaptive-k distribution for $\mu_1 = 0$, $\sigma_1 = 1$, $\mu_2 = 5$, $\sigma_2 = 2$, $\tau = 0.4$   can be seen in Figure-\ref{ref: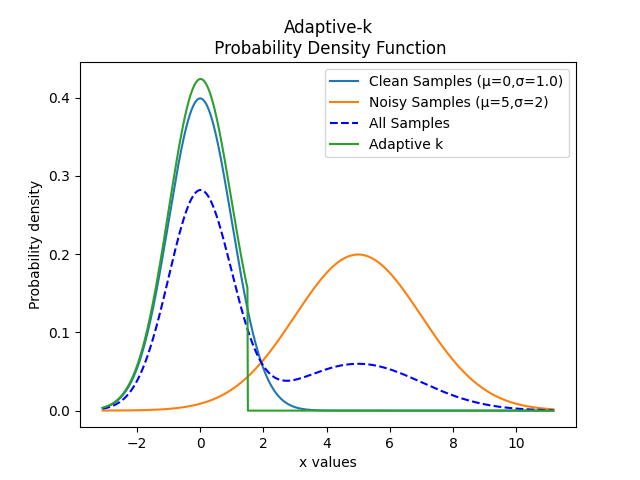}.

\begin{figure}[H]
\begin{center}
%\centering
\includegraphics[width=\columnwidth]{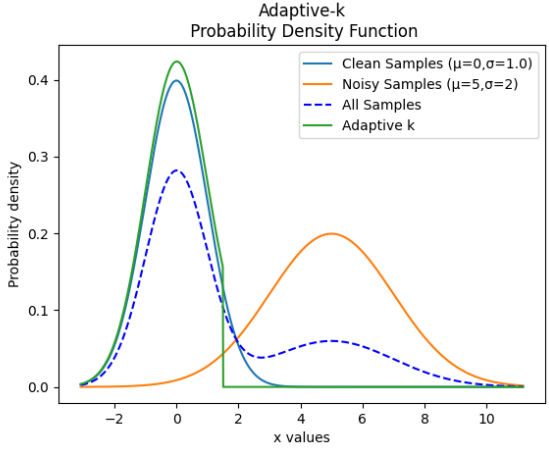}
\caption{Adaptive-k distribution's pdf for noisy dataset where $\mu_1 = 0$, $\sigma_1 = 1$, $\mu_2 = 5$, $\sigma_2 = 2$, $\tau = 0.4$ }
\label{ref:Figure_2.png}
\end{center}
\end{figure}

MSE can be calculated as the sum of the variance and squared bias of the estimator. The Adaptive-k distribution's mean and variance are calculated with the following equations.

The Adaptive-k distribution's mean:

\begin{equation}
\mu_{adk} =  \int_{-\infty}^{\mu_D}x.f_{adk}(x) \,dx\
\end{equation}

The Adaptive-k distribution's variance:

\begin{equation}
\sigma_{adk}^2 =  \left(\int_{-\infty}^{\infty}x^2.f_{adk}(x) \,dx\right) - \mu_{adk}^2
\end{equation}

The MSE of Adaptive-k is calculated as follows.

\begin{equation}
MSE_{adk} = (\mu_1 - \mu_{adk})^2 + \sigma_{adk}^2
\end{equation}

There is no exact analytical solution for the Adaptive-k and MKL distributions because their MSE's contain the erf function. For comparison of the Adaptive-k and MKL MSE's, while keeping the noiseless sample distribution parameters constant, the differences between them were plotted according to the noisy sample's mean $\mu_2$, noisy sample's standard deviation $\sigma_2$, and the noise ratio $\tau$ parameters. For the numerical calculation, the clean samples are considered constants $\mu_1 = 0$ and $\sigma_1 = 1$. Separate calculations were made for noise ratios $\tau = 0.1$, $\tau = 0.2$, $\tau = 0.3$ and $\tau = 0.4$. And it is used $n = 10$ as a constant for MKL, and $k = 6$ as its optimal value.

\begin{figure}[H]
\begin{center}
%\centering
\includegraphics[width=\columnwidth]{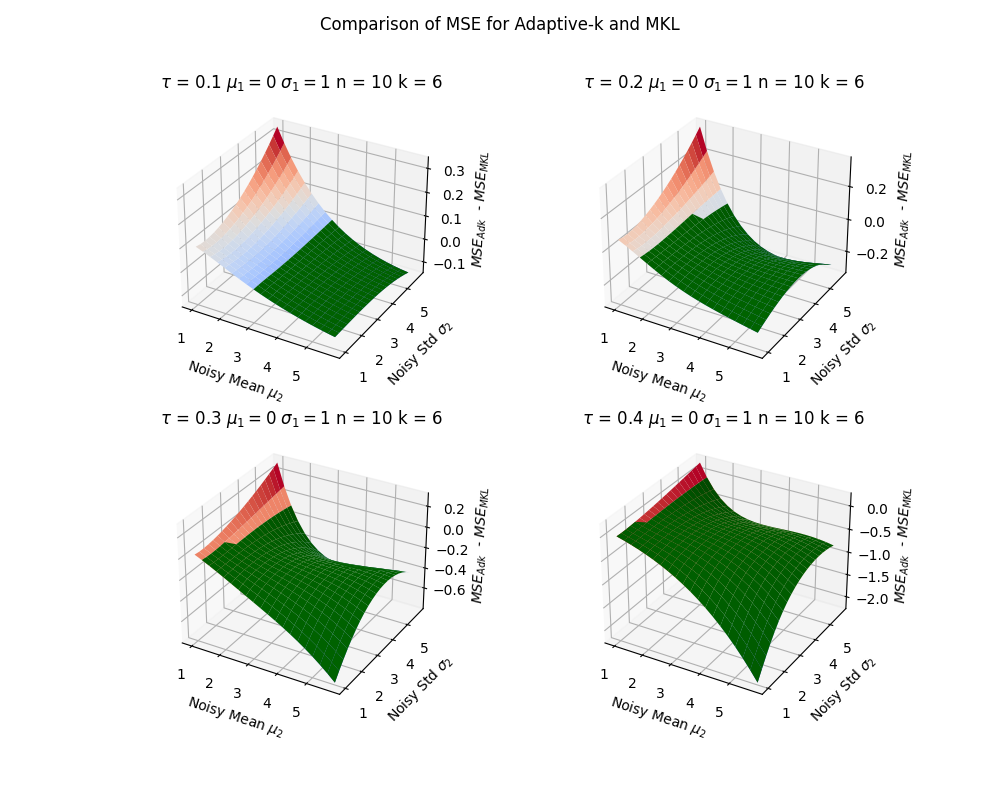}
\caption{Comparison of MSE for Adaptive k and MKL}
\label{ref:Figure_4.png}
\end{center}
\end{figure}

Figure-\ref{ref:Figure_4.png} shows the comparison between Adaptive-k and MKL MSE. Surfaces with Adaptive-k MSE lower than MKL MSE are colored green. As the noise ratio and mean of noisy samples increase, Adaptive-k's error decreases as compared to MKL on surfaces. In the theoretical work, it is assumed that the average loss of noisy samples is higher than that of clean samples. Our experiments and other studies confirm this. Adaptive-k provides consistently better results in areas where Adaptive-k has a lower MSE, as shown in the experiments. 

Therefore,
\begin{equation}
    MSE_{adk} < MSE_{MKL}
\end{equation}

\end{proof}

MKL has a lower MSE in areas with low noise loss mean, low noise ratio, and high noise loss variance. The parameters are associated with the separation of the two distributions. As the losses and noise ratio of the noisy samples decrease and as the variance increases, identifying the distribution of the noisy samples becomes more difficult. However, the Adaptive-k algorithm has a lower MSE when the two distributions are distinguishable.

In this theoretical analysis, the loss distribution of the samples was taken as constant. However, during model training, the loss distribution of the samples changes. Therefore, the mean loss of all samples must be estimated or calculated at each step. Adam is one of the most successful and frequently used methods for estimating gradients. By using samples from each mini-batch, Adaptive-k estimates the average loss of all samples as in the Adam algorithm.

\begin{figure}[H]
\begin{center}
%\centering
\includegraphics[width=\columnwidth]{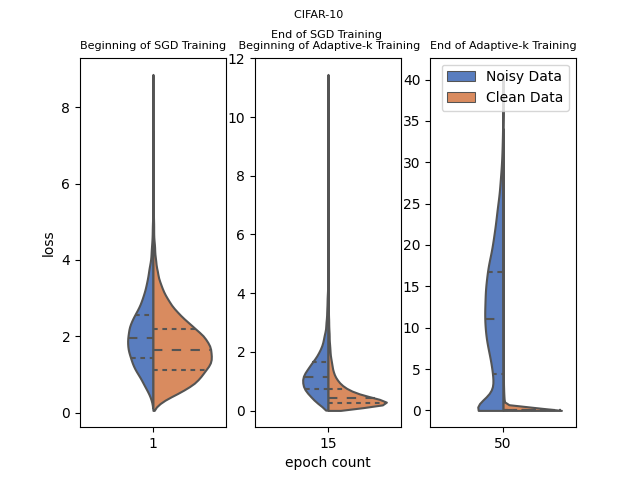}
\caption{Loss Distributions of Clean and Noisy Data Throughout Training for Noise Ratio $\tau = 0.3$}
\label{ref:Figure_7.png}
\end{center}
\end{figure}

Figure-\ref{ref:Figure_7.png} shows the loss distributions of clean and noisy samples throughout the Adaptive-k training. The losses of the samples are randomly distributed according to the initial weights of the model at the start of model training. As a result, at the beginning of SGD training, the distributions of noisy and clean samples are not distinguishable. The mean and variance of the two distributions differ and can be distinguished when the model is trained with the SGD algorithm for a certain number of epochs. After this stage, the model is trained with the Adaptive-k algorithm, whereby noisy samples are omitted from the training and the model performance improves. Thus, we recommend training the model with SGD using all samples before Adaptive-k training. We show in our experiments that training the model with SGD and passing it to MKL or Adaptive-k later on increases the performance of both algorithms.

\section{Discussion and Conclusions}
By detecting label noisy samples and not including them in training, deep neural networks can be made more robust. It is also possible to detect noisy samples if it is not known which samples are labeled noisy by making use of the tendency of noisy samples to have higher loss during model training. However, the loss values of the samples change dynamically during training. As a result, it can be difficult to detect outliers. Our proposed Adaptive-k algorithm uses the samples selected for the mini-batch and follows the mean loss values over the dataset using the Adam algorithm. By doing so, we detect and eliminate noisy samples for each mini-batch, if any. It is possible to make a better distinction using Adaptive-k since it utilizes information from all samples throughout training, not just the current mini-batch. Thus, instead of eliminating a fixed number of samples, it adaptively eliminates some samples for the current mini-batch. The theoretical analysis shows that under certain conditions, for datasets with a certain amount of label noise, the Adaptive-k algorithm has a lower MSE than the MKL and SGD algorithms. When using noisy datasets, the Adaptive-k method performs better than the MKL and SGD algorithms, approaching the Oracle method that does not use noisy samples at all. Thus, the Adaptive-k algorithm increases the robustness of the model for the dataset with label noise.

In our experimental studies, we show the effectiveness of the Adaptive-k algorithm for 3 images and 4 text datasets. It can be said that Adaptive-k consistently performs better than other algorithms, especially for image datasets. Adaptive-k can detect different noise ratios with high precision using the same settings. Adaptive-k is not specific to image datasets; it can be applied to any type of dataset. It does not involve applying to an external source or using any other parallel model training. Therefore, it does not require a longer training time. 

In future studies, we would like to investigate the usage and effect of threshold-based approaches such as Adaptive-k in semi-supervised training.

\section*{Acknowledgment}
This study was supported by the Scientific and Technological Research Council of Turkey (TUBITAK) Grant No: 120E100.

\bibliography{citation}

% Generated by IEEEtran.bst, version: 1.14 (2015/08/26)
\begin{thebibliography}{10}
\providecommand{\url}[1]{#1}
\csname url@samestyle\endcsname
\providecommand{\newblock}{\relax}
\providecommand{\bibinfo}[2]{#2}
\providecommand{\BIBentrySTDinterwordspacing}{\spaceskip=0pt\relax}
\providecommand{\BIBentryALTinterwordstretchfactor}{4}
\providecommand{\BIBentryALTinterwordspacing}{\spaceskip=\fontdimen2\font plus
\BIBentryALTinterwordstretchfactor\fontdimen3\font minus
  \fontdimen4\font\relax}
\providecommand{\BIBforeignlanguage}[2]{{%
\expandafter\ifx\csname l@#1\endcsname\relax
\typeout{** WARNING: IEEEtran.bst: No hyphenation pattern has been}%
\typeout{** loaded for the language `#1'. Using the pattern for}%
\typeout{** the default language instead.}%
\else
\language=\csname l@#1\endcsname
\fi
#2}}
\providecommand{\BIBdecl}{\relax}
\BIBdecl

\bibitem{zhang2021understanding}
C.~Zhang, S.~Bengio, M.~Hardt, B.~Recht, and O.~Vinyals, ``Understanding deep
  learning (still) requires rethinking generalization,'' \emph{Communications
  of the ACM}, vol.~64, no.~3, pp. 107--115, 2021.

\bibitem{liao2020weakly}
S.~Liao, X.~Jiang, and Z.~Ge, ``Weakly supervised multilayer perceptron for
  industrial fault classification with inaccurate and incomplete labels,''
  \emph{IEEE Transactions on Automation Science and Engineering}, 2020.

\bibitem{ortego2021towards}
D.~Ortego, E.~Arazo, P.~Albert, N.~E. O'Connor, and K.~McGuinness, ``Towards
  robust learning with different label noise distributions,'' in \emph{2020
  25th International Conference on Pattern Recognition (ICPR)}.\hskip 1em plus
  0.5em minus 0.4em\relax IEEE, 2021, pp. 7020--7027.

\bibitem{arazo2019unsupervised}
E.~Arazo, D.~Ortego, P.~Albert, N.~O’Connor, and K.~McGuinness,
  ``Unsupervised label noise modeling and loss correction,'' in
  \emph{International conference on machine learning}.\hskip 1em plus 0.5em
  minus 0.4em\relax PMLR, 2019, pp. 312--321.

\bibitem{nishi2021augmentation}
K.~Nishi, Y.~Ding, A.~Rich, and T.~Hollerer, ``Augmentation strategies for
  learning with noisy labels,'' in \emph{Proceedings of the IEEE/CVF Conference
  on Computer Vision and Pattern Recognition}, 2021, pp. 8022--8031.

\bibitem{majidi2021exponentiated}
N.~Majidi, E.~Amid, H.~Talebi, and M.~K. Warmuth, ``Exponentiated gradient
  reweighting for robust training under label noise and beyond,'' \emph{arXiv
  preprint arXiv:2104.01493}, 2021.

\bibitem{shah2020choosing}
V.~Shah, X.~Wu, and S.~Sanghavi, ``Choosing the sample with lowest loss makes
  sgd robust,'' in \emph{International Conference on Artificial Intelligence
  and Statistics}.\hskip 1em plus 0.5em minus 0.4em\relax PMLR, 2020, pp.
  2120--2130.

\bibitem{bengio2009curriculum}
Y.~Bengio, J.~Louradour, R.~Collobert, and J.~Weston, ``Curriculum learning,''
  in \emph{Proceedings of the 26th annual international conference on machine
  learning}, 2009, pp. 41--48.

\bibitem{kesgin2022cyclical}
H.~T. Kesgin and M.~F. Amasyali, ``Cyclical curriculum learning,'' \emph{arXiv
  preprint arXiv:2202.05531}, 2022.

\bibitem{han2018co}
B.~Han, Q.~Yao, X.~Yu, G.~Niu, M.~Xu, W.~Hu, I.~Tsang, and M.~Sugiyama,
  ``Co-teaching: Robust training of deep neural networks with extremely noisy
  labels,'' \emph{Advances in neural information processing systems}, vol.~31,
  2018.

\bibitem{yang2021learning}
H.~Yang, Y.~Jin, Z.~Li, D.-B. Wang, L.~Miao, X.~Geng, and M.-L. Zhang,
  ``Learning from noisy labels via dynamic loss thresholding,'' \emph{arXiv
  preprint arXiv:2104.02570}, 2021.

\bibitem{yao2020searching}
Q.~Yao, H.~Yang, B.~Han, G.~Niu, and J.~T.-Y. Kwok, ``Searching to exploit
  memorization effect in learning with noisy labels,'' in \emph{International
  Conference on Machine Learning}.\hskip 1em plus 0.5em minus 0.4em\relax PMLR,
  2020, pp. 10\,789--10\,798.

\bibitem{chi2019median}
Y.~Chi, Y.~Li, H.~Zhang, and Y.~Liang, ``Median-truncated gradient descent: A
  robust and scalable nonconvex approach for signal estimation,'' in
  \emph{Compressed Sensing and Its Applications}.\hskip 1em plus 0.5em minus
  0.4em\relax Springer, 2019, pp. 237--261.

\bibitem{shen2019learning}
Y.~Shen and S.~Sanghavi, ``Learning with bad training data via iterative
  trimmed loss minimization,'' in \emph{International Conference on Machine
  Learning}.\hskip 1em plus 0.5em minus 0.4em\relax PMLR, 2019, pp. 5739--5748.

\bibitem{nakamura2020regularization}
K.~Nakamura and B.-W. Hong, ``Regularization in neural network optimization via
  trimmed stochastic gradient descent with noisy label,'' \emph{arXiv preprint
  arXiv:2012.11073}, 2020.

\bibitem{kingma2014adam}
D.~P. Kingma and J.~Ba, ``Adam: A method for stochastic optimization,''
  \emph{arXiv preprint arXiv:1412.6980}, 2014.

\bibitem{deng2012mnist}
L.~Deng, ``The mnist database of handwritten digit images for machine learning
  research [best of the web],'' \emph{IEEE signal processing magazine},
  vol.~29, no.~6, pp. 141--142, 2012.

\bibitem{xiao2017fashion}
H.~Xiao, K.~Rasul, and R.~Vollgraf, ``Fashion-mnist: a novel image dataset for
  benchmarking machine learning algorithms,'' \emph{arXiv preprint
  arXiv:1708.07747}, 2017.

\bibitem{krizhevsky2009learning}
A.~Krizhevsky, G.~Hinton \emph{et~al.}, ``Learning multiple layers of features
  from tiny images,'' 2009.

\bibitem{imdb}
A.~Maas, R.~E. Daly, P.~T. Pham, D.~Huang, A.~Y. Ng, and C.~Potts, ``Learning
  word vectors for sentiment analysis,'' in \emph{Proceedings of the 49th
  annual meeting of the association for computational linguistics: Human
  language technologies}, 2011, pp. 142--150.

\bibitem{sarcasm}
R.~Misra and P.~Arora, ``Sarcasm detection using hybrid neural network,''
  \emph{arXiv preprint arXiv:1908.07414}, 2019.

\bibitem{book}
R.~Misra and J.~Grover, \emph{Sculpting Data for ML: The first act of Machine
  Learning}, 01 2021.

\bibitem{hotel}
M.~H. Alam, W.-J. Ryu, and S.~Lee, ``Joint multi-grain topic sentiment:
  modeling semantic aspects for online reviews,'' \emph{Information Sciences},
  vol. 339, pp. 206--223, 2016.

\bibitem{20_news}
\BIBentryALTinterwordspacing
(2014) Home page for 20 newsgroups data set. [Online]. Available:
  \url{http://qwone.com/~jason/20Newsgroups/}
\BIBentrySTDinterwordspacing

\end{thebibliography}
\bibliographystyle{IEEEtran}
\end{document}